\newtheorem{theorem}{Theorem}
\newcommand{\cmark}{\ding{51}}
\newcommand{\xmark}{\ding{55}}
\title{ASSIST: Towards Label Noise-Robust Dialogue State Tracking}
\author{Fanghua Ye \and Yue Feng \and Emine Yilmaz \\
  University College London\\
  London, UK \\
  
  \texttt{\{fanghua.ye.19, yue.feng.20, emine.yilmaz\}@ucl.ac.uk} }
\begin{document}
\maketitle
\begin{abstract}
The MultiWOZ 2.0 dataset has greatly boosted the research on dialogue state tracking (DST). However, substantial noise has been discovered in its state annotations. Such noise brings about huge challenges for training DST models robustly. Although several refined versions, including MultiWOZ 2.1-2.4, have been published recently, there are still lots of noisy labels, especially in the training set. Besides, it is costly to rectify all the problematic annotations. In this paper, instead of improving the annotation quality further, we propose a general framework, named ASSIST (l\textbf{A}bel noi\textbf{S}e-robu\textbf{S}t d\textbf{I}alogue \textbf{S}tate \textbf{T}racking), to train DST models robustly from noisy labels. ASSIST first generates pseudo labels for each sample in the training set by using an auxiliary model trained on a small clean dataset, then puts the generated pseudo labels and vanilla noisy labels together to train the primary model. We show the validity of ASSIST theoretically. Experimental results also demonstrate that ASSIST improves the joint goal accuracy of DST by up to $28.16\%$ on MultiWOZ 2.0 and $8.41\%$ on  MultiWOZ 2.4, compared to using only the vanilla noisy labels. 
\end{abstract}

\section{Introduction}

Task-oriented dialogue systems play an important role in helping users accomplish a variety of tasks through verbal interactions~\citep{young2013pomdp, gao2019neural}. Dialogue state tracking (DST) is an essential component of the dialogue manager in pipeline-based task-oriented dialogue systems. It aims to keep track of users' intentions at each turn of the conversation~\citep{mrksic-etal-2017-neural}. The state information indicates the progress of the conversation and is leveraged to determine the next system action and generate the next system response~\citep{chen-etal-2017-deep}. As shown in Figure~\ref{fig:example}, the dialogue state is typically represented as a set of (\textit{slot}, \textit{value}) pairs~\citep{williams2014dialog, henderson-etal-2014-second}. 
Therefore, the problem of DST is defined as extracting the values for all slots from the dialogue context at each turn of the conversation.

Over the past few years, DST has made significant progress, attributed to a number of publicly available dialogue datasets, such as DSTC2~\citep{henderson-etal-2014-second}, FRAMES~\citep{el-asri-etal-2017-frames}, MultiWOZ 2.0~\citep{budzianowski-etal-2018-multiwoz}, CrossWOZ~\citep{zhu2020crosswoz}, and SGD~\citep{rastogi2020towards}. Among these datasets, MultiWOZ 2.0 is the most popular one. So far, lots of DST models have been built on top of it~\citep{lee-etal-2019-sumbt, wu-etal-2019-transferable, ouyang-etal-2020-dialogue, kim-etal-2020-efficient, hu-etal-2020-sas, ye2021slot, lin2021knowledge}.

\begin{figure}[t]
  \centering
  \includegraphics[width=1.0\linewidth]{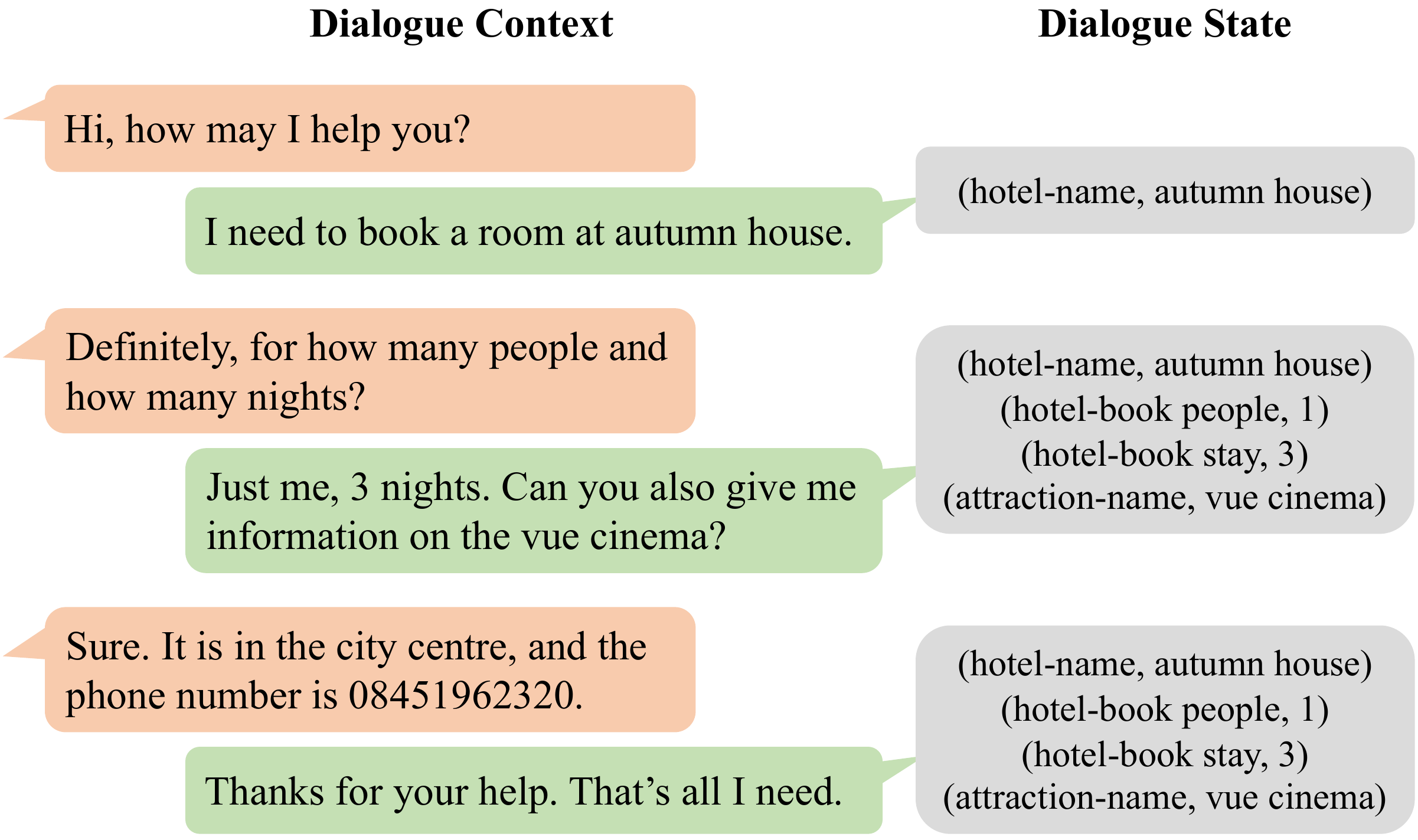}
  \caption{An example dialogue spanning two domains. On the left is the dialogue context with system respon-ses shown in orange and user utterances in green. The dialogue state at each turn is presented on the right.}
  \label{fig:example}
\end{figure}

However, it has been found out that there is substantial noise in the state annotations of MultiWOZ 2.0~\citep{eric-etal-2020-multiwoz}. These noisy labels may impede the training of robust DST models and lead to noticeable performance decrease~\citep{zhang2016understanding}. To remedy this issue, massive efforts have been devoted to rectifying the annotations, and four refined versions, including MultiWOZ 2.1~\citep{eric-etal-2020-multiwoz}, MultiWOZ 2.2~\citep{zang2020multiwoz}, MultiWOZ 2.3~\citep{han2020multiwoz}, and MultiWOZ 2.4~\citep{ye2021multiwoz}, have been released. Even so, there are still plenty of noisy and inconsistent labels. For example, in the latest version MultiWOZ 2.4, the validation set and test set have been manually re-annotated and tend to be noise-free. While the training set is still noisy, as it remains intact. In reality, it is costly and laborious to refine existing large-scale noisy datasets or collect new ones with fully precise annotations~\citep{wei2020combating}, let alone dialogue datasets with multiple domains and multiple turns. In view of this, we argue that it is essential to devise particular learning algorithms to train DST models robustly from noisy labels.

Although loads of noisy label learning algorithms~\citep{natarajan2013learning, han2020survey} have been proposed in the machine learning community, most of them target only multi-class classification~\citep{song2020learning}. However, as illustrated in Figure~\ref{fig:example}, the dialogue state may contain multiple labels, which makes it unstraightforward to apply existing noisy label learning algorithms to the DST task. In this paper we propose a general framework, named ASSIST (l\textbf{A}bel noi\textbf{S}e-robu\textbf{S}t d\textbf{I}alogue \textbf{S}tate \textbf{T}racking), to train DST models robustly from noisy labels. ASSIST first trains an auxiliary model on a small clean dataset to generate pseudo labels for each sample in the noisy training set. Then, it leverages both the generated pseudo labels and vanilla noisy labels to train the primary model. Since the auxiliary model is trained on the clean dataset, it can be expected that the pseudo labels will help us train the primary model more robustly. Note that ASSIST is based on the assumption that we have access to a small clean dataset. This assumption is reasonable, as it is feasible to manually collect a small noise-free dataset or re-annotate a portion of a large noisy dataset.

In summary, our main contributions include:
\begin{itemize}
    \item We propose a general framework ASSIST to train robust DST models from noisy labels. To the best of our knowledge, we are the first to tackle the DST problem by taking into consideration the label noise.
    
    \item We theoretically analyze why the pseudo labels are beneficial and show that a proper combination of the pseudo labels and vanilla noisy labels can approximate the unknown true labels more accurately.
    
    \item We conduct extensive experiments on MultiWOZ 2.0 \& 2.4. The results demonstrate that ASSIST can improve the DST performance on both datasets by a large margin.
\end{itemize}

\section{Problem Definition}

In this section, we first provide the conventional definition of DST and then extend the definition to the noisy label learning scenario.

\subsection{Conventional Dialogue State Tracking}

Let $\mathcal{X}=\{(R_1, U_1), \dots, (R_T, U_T)\}$ denote a dialogue of $T$ turns, where $R_t$ and $U_t$ represent the system response and user utterance at turn $t$, respectively. The dialogue state at turn $t$ is defined as $\mathcal{B}_t = \{(s, v_t)|s\in \mathcal{S}\}$, where $\mathcal{S}$ denotes the set of predefined slots and $v_t$ is the corresponding value of slot $s$. Following previous work~\citep{lee-etal-2019-sumbt, hu-etal-2020-sas, ye2021slot}, a slot in this paper refers to the concatenation of the domain name and slot name so as to include the domain information. For example, we use "\textit{hotel-name}" to represent the slot "\textit{name}" in the hotel domain.

In general, the issue of DST is defined as learning a dialogue state tracker $\mathcal{F}: \mathcal{X}_t \rightarrow \mathcal{B}_t$ that takes the dialogue context $\mathcal{X}_t$ as input and predicts the dialogue state $\mathcal{B}_t$ at each turn $t$ as accurately as possible. Here, $\mathcal{X}_t$ represents the dialogue history up to turn $t$, i.e., $\mathcal{X}_t = \{(R_1, U_1), \dots, (R_t, U_t)\}$.

\subsection{Dialogue State Tracking with Noisy Labels}

Conventionally, all the state labels are assumed to be correct. However, this assumption may not hold.  In practice, dialogue state annotations are error-prone~\citep{han2020multiwoz}. There are a couple of reasons. First, the states are usually annotated by crowdworkers to improve the labelling efficiency. Due to limited knowledge, crowdworkers cannot annotate all the states with $100\%$ accuracy, which naturally incurs noisy labels~\citep{han2020survey}. Second, the dialogue may span multiple domains, which also increases the labelling difficulty. Apparently, the noisy labels are harmful and likely to lead to sub-optimal performance. Therefore, it is crucial to take them into consideration so as to train DST models more robustly.

Let $\tilde{\mathcal{B}}_t = \{(s, \tilde{v}_t)|s \in \mathcal{S}\}$ denote the noisy state annotations, where $\tilde{v}_t$ is the noisy label of slot $s$ at turn $t$. We use $\mathcal{B}_t = \{(s, v_t)|s\in S\}$ to denote the noise-free state annotations. Here, $v_t$ represents the true label of slot $s$ at turn $t$, which is unknown. In fact, existing DST approaches are only able to learn a sub-optimal dialogue state tracker $\tilde{\mathcal{F}}: \mathcal{X}_t \rightarrow \tilde{\mathcal{B}}_t$ rather than the optimal state tracker $\mathcal{F}: \mathcal{X}_t \rightarrow \mathcal{B}_t$, as none of them have considered the influence of noisy labels. In this work, we aim to learn a robust state tracker $\mathcal{F}^*$ that can better approximate $\mathcal{F}$ from the noisy state annotations $\tilde{\mathcal{B}}_t$.

\section{Proposed Approach}

We introduce a general framework ASSIST, aiming to train DST models robustly from noisy labels. We assume that a small clean dataset is accessible. Based on this dataset, ASSIST first trains an auxiliary model $\mathcal{A}$. Then, it leverages $\mathcal{A}$ to generate pseudo labels for each sample in the noisy training set. The pseudo state annotations are represented as $\breve{\mathcal{B}}_t = \{(s, \breve{v}_t)|s \in \mathcal{S}\}$, where $\breve{v}_t$ denotes the pseudo label of slot $s$ at turn $t$. Afterwards, both the generated pseudo labels and vanilla noisy labels are exploited to train the primary model $\mathcal{F}^*$. That is, we intend to learn $\mathcal{F}^*: \mathcal{X}_t \rightarrow C(\breve{\mathcal{B}}_t, \tilde{\mathcal{B}}_t)$, where $C(\breve{\mathcal{B}}_t, \tilde{\mathcal{B}}_t)$ is a combination of $\breve{\mathcal{B}}_t$ and $\tilde{\mathcal{B}}_t$. 

Essentially, any existing DST models can be employed as the auxiliary model. However, these models may lead to overfitting due to the small size of the clean dataset. To tackle this issue, we propose a new simple model as the auxiliary model\footnote{We adopt existing DST models as the primary model.}.

\subsection{Auxiliary Model Architecture}

Figure~\ref{fig:architecture} shows the architecture, which consists of a dialogue context semantic encoder, a slot attention module, and a slot-value matching module.

\subsubsection*{Dialogue Context Semantic Encoder}

Similar to \citep{lee-etal-2019-sumbt, kim-etal-2020-efficient, ye2021slot}, we utilize the pre-trained language model BERT~\citep{devlin-etal-2019-bert} to encode the dialogue context $\mathcal{X}_t$ into contextual semantic representations. Let $ {Z}_t = R_t \oplus U_t$ be the concatenation of the system response and user utterance at turn $t$, where $\oplus$ denotes the operator of sequence concatenation. Then, the dialogue context $\mathcal{X}_t$ can be represented as $X_t = {Z}_1 \oplus {Z}_2 \oplus \cdots \oplus {Z}_t$.

We also concatenate each slot-value pair and denote the representation of the dialogue state at turn $t$ as ${B}_t = \bigoplus_{(s, v_{t}) \in {\mathcal{B}}_{t}, v_{t} \neq \verb|none|} s \oplus v_{t}$, in which only non-\verb|none| slots are included. $B_t$ can serve as a compact representation of the dialogue history. In view of this, we treat the previous turn dialogue state $B_{t-1}$ as part of the input as well, which can be beneficial when $X_t$ exceeds the maximum input length of BERT. The complete input sequence to the encoder module is then denoted as:
\begin{equation*}
    I_t = [CLS] \oplus X_{t-1} \oplus B_{t-1} \oplus [SEP] \oplus {Z}_t \oplus [SEP],
\end{equation*}
where $[CLS]$ and $[SEP]$ are the two special tokens introduced by BERT.

Let $\bm{H}_t  \in \mathbb{R}^{|I_t| \times d}$ be the semantic matrix representation of $I_t$. Here, $|I_t|$ and $d$ denote the sequence length of $I_t$ and the BERT output dimension, respectively. Then, we have: 
\begin{equation*} 
    \bm{H}_t = \verb|BERT|_{finetune}(I_t),
\end{equation*}
where $\verb|BERT|_{finetune}$ means that the BERT model will be fine-tuned during the training process.

For each slot $s$ and its candidate value $v' \in \mathcal{V}_s$, we employ another BERT to encode them into semantic vectors $\bm{h}^s \in \mathbb{R}^d$ and $\bm{h}^{v'} \in \mathbb{R}^d$. Here, $\mathcal{V}_s$ denotes the candidate value set of slot $s$. Unlike the dialogue context, we leverage the pre-trained BERT without fine-tuning to embed $s$ and $v'$. Besides, we adopt the output vector corresponding to the special token $[CLS]$ as an aggregated representation of slot $s$ and value $v'$, i.e.,
\begin{equation*}
    \begin{aligned}
        \bm{h}^s &= \verb|BERT|^{[CLS]}_{fixed}([CLS] \oplus s \oplus [SEP]), \\
        \bm{h}^{v'} &= \verb|BERT|^{[CLS]}_{fixed}([CLS] \oplus v' \oplus [SEP]).
    \end{aligned}
\end{equation*}

\begin{figure}[t]
  \centering
  \includegraphics[width=0.97\linewidth]{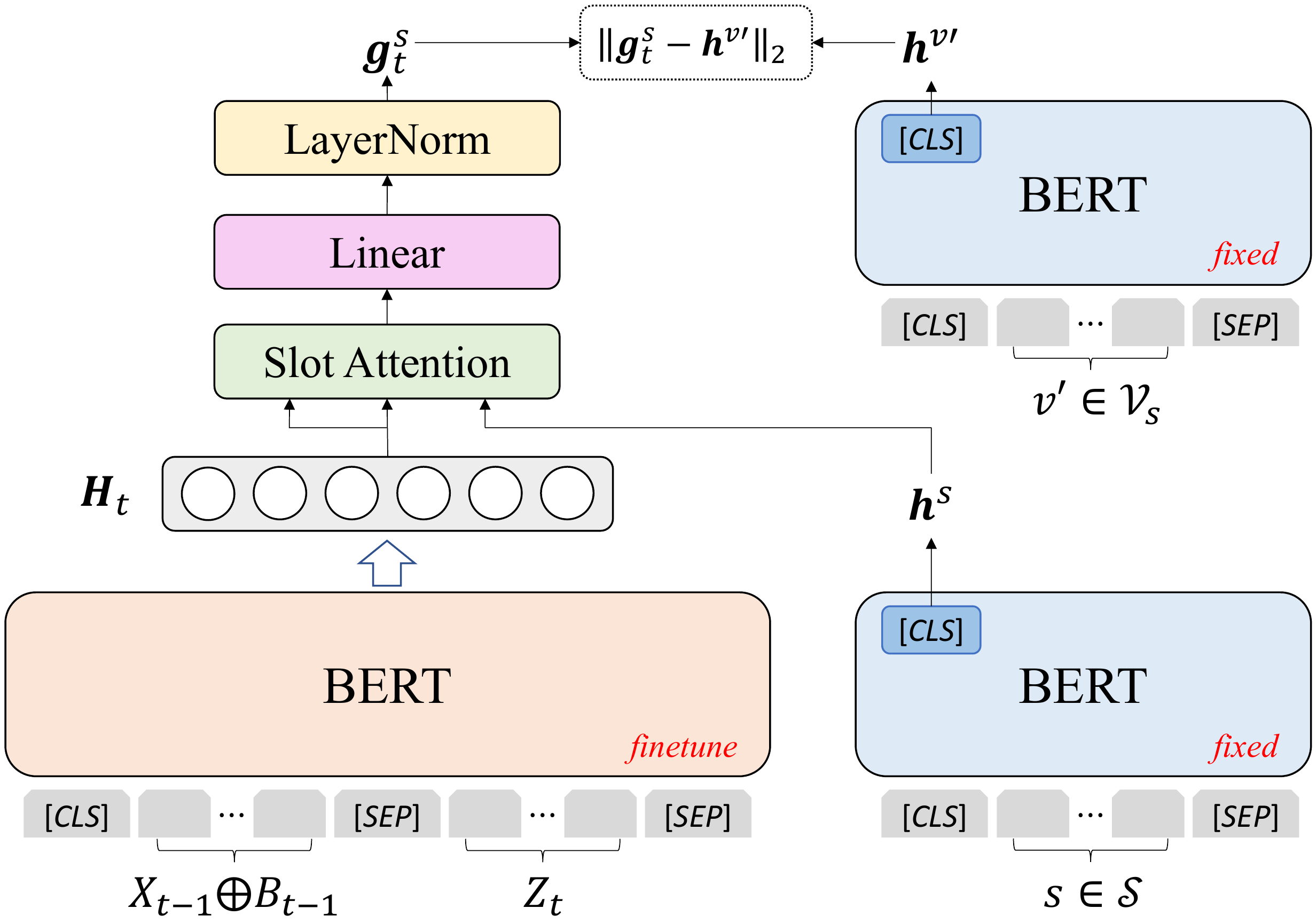}
  \caption{Overall architecture of the auxiliary model. The parameters of the BERT used to encode slots and values are fixed during the training process.}
  \label{fig:architecture}
\end{figure}

\subsubsection*{Slot Attention}

The slot attention module is exploited to retrieve slot-relevant information for all the slots from the same dialogue context. The slot attention is a multi-head attention~\citep{vaswani2017attention}. Specifically, the slot representation $\bm{h}^s$ is regarded as the query vector, and the dialogue context representation $\bm{H}_t$ is taken as both the key matrix and value matrix. The slot attention matches $\bm{h}^s$ to the semantic vector of each word in the dialogue context and calculates the attention score, based on which the slot-specific information can be extracted. Let $\bm{a}^s_t \in \mathbb{R}^d$ denote a $d$-dimensional vector  representation of the related information of slot $s$ at turn $t$, we obtain: 
\begin{equation*}
    \bm{a}^s_t = \verb|MultiHead|(\bm{h}^s, \bm{H}_t, \bm{H}_t).
\end{equation*}
$\bm{a}^s_t$ is expected to be close to the semantic vector representation of the true value of slot $s$. 

Considering that the output of BERT is normalized by layer normalization~\citep{ba2016layer}, we also feed $\bm{a}^s_t$ to a layer normalization layer, which is preceded by a linear transformation layer. The final slot-specific vector $\bm{g}^s_t \in \mathbb{R}^d$ is calculated as:
\begin{equation*}
   \bm{g}^s_t = \verb|LayerNorm|(\verb|Linear|(\bm{a}^s_t)).
\end{equation*}

\subsubsection*{Slot-Value Matching}

The slot-value matching module is utilized to predict the value of each slot $s$. It first calculates the distance between the slot-specific representation $\bm{g}^s_t$ and the semantic representation of each candidate value $v' \in \mathcal{V}_s$, i.e., $\bm{h}^{v'}$. Then, the candidate value with the smallest distance is selected as the prediction. The $\ell_2$ norm is adopted to compute the distance. Denoting $\hat{v}_t$ as the predicted value of slot $s$ at turn $t$, we have:
\begin{equation*}
    \hat{v}_t = \operatorname*{argmin}_{v' \in \mathcal{V}_s} \Vert \bm{g}^s_t - \bm{h}^{v'} \Vert_2.
\end{equation*}

\subsection{Auxiliary Model Training}

We leverage a small clean dataset to train the auxiliary model. Since the true labels are available, the auxiliary model is directly trained to maximize the joint probability of all slot values. The probability of the true value $v_t$ of slot $s$ at turn $t$ is defined as:
\begin{equation*}
    p(v_t|\mathcal{X}_t, s) = \frac{\exp\left(-\Vert \bm{g}^s_t - \bm{h}_t^{v} \Vert_2\right)}{\sum_{v' \in \mathcal{V}_s} \exp\left(-\Vert \bm{g}^s_t - \bm{h}^{v'} \Vert_2\right)},
\end{equation*}
where $\bm{h}^v_t$ is the semantic representation of $v_t$. Maximizing the joint probability $\Pi_{(s, v_t) \in \mathcal{B}_t} p(v_t|\mathcal{X}_t, s)$ is equivalent to minimizing the following objective:
\begin{equation*}
    \mathcal{L}_{aux} = \sum_{(s, v_t) \in \mathcal{B}_t} - \log p(v_t|\mathcal{X}_t, s).
\end{equation*}

\subsection{Pseudo Label Generation}
Our approach depends on the auxiliary model $\mathcal{A}$ to generate pseudo labels $\breve{\mathcal{B}}_t = \{(s, \breve{v}_t)|s \in \mathcal{S}\}$ for each sample in the noisy training set. In this work, we treat each dialogue context $\mathcal{X}_t$ rather than the entire dialogue as a training sample. Without loss of generality, the pseudo label generation process is denoted as follows:
\begin{equation*}
    \breve{\mathcal{B}}_t = \mathcal{A}(\mathcal{X}_t, \mathcal{S}),
\end{equation*}
where $\mathcal{X}_t$ belongs to the noisy training set.

\subsection{Primary Model Training}

To reduce the influence of noisy labels, we combine the generated pseudo labels and vanilla noisy labels to train the primary model. 

Let $\breve{\bm{v}}_t$ and  $\tilde{\bm{v}}_t$ be the one-hot representation of the pseudo label $\breve{v}_t$ and vanilla noisy label $\tilde{v}_t$, respectively. Then, we can define the combined label as:
\begin{equation*}
    \bm{v}^c_t = \alpha \breve{\bm{v}}_t + (1 - \alpha) \tilde{\bm{v}}_t,
\end{equation*}
where $\alpha (0 \le \alpha \le 1)$ is a parameter to balance the pseudo labels and vanilla labels. We calculate the probability of $\bm{v}^c_t$ as below: 
\begin{equation*}
    p(\bm{v}^c_t|\mathcal{X}_t, s) = p(\breve{v}_t | \mathcal{X}_t, s)^{\alpha}  p(\tilde{v}_t | \mathcal{X}_t, s)^{(1 - \alpha)}.
\end{equation*}
Here, $p(\breve{v}_t | \mathcal{X}_t, s)$ and $p(\tilde{v}_t | \mathcal{X}_t, s)$ correspond to the probability of $\breve{v}_t$ and $\tilde{v}_t$, respectively.

Let $C(\breve{\mathcal{B}}_t, \tilde{\mathcal{B}}_t) = \{(s, \bm{v}^c_t)|s \in \mathcal{S}\}$ represent the combined state annotations. The training objective of the primary model is then defined as:
\begin{equation*}
\begin{aligned}
    \mathcal{L}_{pri} =& \sum_{(s, \bm{v}^c_t) \in C(\breve{\mathcal{B}}_t, \tilde{\mathcal{B}}_t)} - \log p(\bm{v}^c_t|\mathcal{X}_t, s) \\
    =& ~ \alpha \sum_{(s, \breve{v}_t) \in \breve{\mathcal{B}}_t} - \log p(\breve{v}_t | \mathcal{X}_t, s) \\
    &+ (1-\alpha) \sum_{(s, \tilde{v}_t) \in \tilde{\mathcal{B}}_t} -\log p(\tilde{v}_t | \mathcal{X}_t, s) \\
    = & ~ \alpha \mathcal{L}_{pseudo} + (1- \alpha) \mathcal{L}_{vanilla},
\end{aligned}
\end{equation*}
where $\mathcal{L}_{pseudo}$ and $\mathcal{L}_{vanilla}$ correspond to the training objective of using only the pseudo labels and using only the vanilla noisy labels, respectively. By minimizing $ \mathcal{L}_{pri}$, the primary model is trained to learn from the vanilla noisy labels and at the same time imitate the predictions of the auxiliary model.

\subsection{Theoretical Analysis}

Since the pseudo labels are generated by the auxiliary model that has been trained on a small clean dataset, it can be expected that the combined labels are able to serve as a better approximation to the unknown true labels. Let $\bm{v}_t$ denote the one-hot representation of the unknown true value $v_t$ of slot $s$ at turn $t$. We adopt the mean squared loss to define the approximation error of any corrupted labels $\ddot{\bm{v}}_t$ associated with the noisy training set $\mathcal{D}_n$ as:
\begin{equation*}
     Y_{\ddot{\bm{v}}} = \frac{1}{|\mathcal{D}_n||\mathcal{S}|}\sum_{\mathcal{X}_t \in \mathcal{D}_n} \sum_{s \in \mathcal{S}} E_{\mathcal{D}_c} [ \Vert \ddot{\bm{v}}_t - \bm{v}_t \Vert^2_2],
\end{equation*}
where the expectation ranges over different choices of the clean dataset $\mathcal{D}_c$, and $|\cdot|$ returns the cardinality of a set.


Next, we show that the approximation error of the combined labels can be smaller than that of both the vanilla noisy labels and the generated pseudo labels. The details are presented in Theorem~\ref{thm}.

\begin{theorem} \label{thm}
 The optimal approximation error with respect to the combined labels $\bm{v}^c_t$ is smaller than that of the vanilla labels $\tilde{\bm{v}}_t$ and pseudo labels $\breve{\bm{v}}_t$, i.e., 
 \begin{equation*}
     \min_{\alpha} Y_{\bm{v}^c} < \min \{Y_{\tilde{\bm{v}}}, Y_{\breve{\bm{v}}}\}.
 \end{equation*}
 By setting $\alpha = \frac{Y_{\tilde{\bm{v}}}}{Y_{\tilde{\bm{v}}} + Y_{\breve{\bm{v}}}}$, $Y_{\bm{v}^c}$ reaches its minimum:
 \begin{equation*}
     \min_{\alpha} Y_{\bm{v}^c} = \frac{Y_{\tilde{\bm{v}}}  Y_{\breve{\bm{v}}}}{Y_{\tilde{\bm{v}}} + Y_{\breve{\bm{v}}}}.
 \end{equation*}
\end{theorem}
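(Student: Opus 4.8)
The plan is to reduce everything to a single scalar quadratic in $\alpha$. First I would exploit the fact that, since the coefficients $\alpha$ and $1-\alpha$ sum to one, the combined error vector telescopes cleanly:
\begin{equation*}
\bm{v}^c_t - \bm{v}_t = \alpha(\breve{\bm{v}}_t - \bm{v}_t) + (1-\alpha)(\tilde{\bm{v}}_t - \bm{v}_t).
\end{equation*}
Expanding $\Vert \bm{v}^c_t - \bm{v}_t \Vert_2^2$ via this identity, taking the expectation $E_{\mathcal{D}_c}$, and averaging over $\mathcal{X}_t \in \mathcal{D}_n$ and $s \in \mathcal{S}$, I obtain
\begin{equation*}
Y_{\bm{v}^c} = \alpha^2 Y_{\breve{\bm{v}}} + (1-\alpha)^2 Y_{\tilde{\bm{v}}} + 2\alpha(1-\alpha)\,\Gamma,
\end{equation*}
where $\Gamma$ is the averaged expected inner product $E_{\mathcal{D}_c}[\langle \breve{\bm{v}}_t - \bm{v}_t,\, \tilde{\bm{v}}_t - \bm{v}_t \rangle]$ between the pseudo-label error and the noisy-label error. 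This already isolates the entire difficulty into a single scalar.

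The crux, and the step I expect to be the main obstacle, is showing that the cross term $\Gamma$ vanishes. The noisy label $\tilde{\bm{v}}_t$ is fixed in the training set and does not depend on the random draw of $\mathcal{D}_c$, so the expectation factors as $\langle E_{\mathcal{D}_c}[\breve{\bm{v}}_t - \bm{v}_t],\, \tilde{\bm{v}}_t - \bm{v}_t \rangle$. Hence it suffices to assume the auxiliary model is unbiased, i.e. $E_{\mathcal{D}_c}[\breve{\bm{v}}_t] = \bm{v}_t$, which is justified because the auxiliary model is trained on clean data and its pseudo-label error should have zero mean over the choice of clean set. Under this hypothesis $\Gamma = 0$ and the objective collapses to the clean quadratic $Y_{\bm{v}^c} = \alpha^2 Y_{\breve{\bm{v}}} + (1-\alpha)^2 Y_{\tilde{\bm{v}}}$. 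I would flag this unbiasedness (or at minimum a zero-correlation) condition as the one load-bearing assumption of the whole argument; everything downstream is routine.

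Finally I would minimize this convex quadratic. Differentiating with respect to $\alpha$ and setting the derivative to zero gives $\alpha Y_{\breve{\bm{v}}} = (1-\alpha) Y_{\tilde{\bm{v}}}$, hence $\alpha = Y_{\tilde{\bm{v}}}/(Y_{\tilde{\bm{v}}} + Y_{\breve{\bm{v}}})$, and substituting back yields $\min_{\alpha} Y_{\bm{v}^c} = Y_{\tilde{\bm{v}}} Y_{\breve{\bm{v}}}/(Y_{\tilde{\bm{v}}} + Y_{\breve{\bm{v}}})$, matching both claimed expressions. For the strict inequality, writing $a = Y_{\tilde{\bm{v}}}$ and $b = Y_{\breve{\bm{v}}}$ and assuming both are strictly positive (neither label source is perfect on average), I note that $ab/(a+b) < \min\{a, b\}$: if $a \le b$ then $ab/(a+b) = a \cdot b/(a+b) < a$ because $b/(a+b) < 1$, and symmetrically in the other case. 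This delivers $\min_{\alpha} Y_{\bm{v}^c} < \min\{Y_{\tilde{\bm{v}}}, Y_{\breve{\bm{v}}}\}$ and completes the proof.
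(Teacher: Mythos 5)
Your proof follows essentially the same route as the paper's: both reduce $Y_{\bm{v}^c}$ to the quadratic $\alpha^2 Y_{\breve{\bm{v}}} + (1-\alpha)^2 Y_{\tilde{\bm{v}}}$ by killing the cross term through the fact that $\tilde{\bm{v}}_t$ does not depend on the draw of $\mathcal{D}_c$ together with the unbiasedness assumption $E_{\mathcal{D}_c}[\breve{\bm{v}}_t] \approx \bm{v}_t$ (which the paper motivates via the bias--variance decomposition and the capacity of the BERT-based auxiliary model), and then minimize the resulting convex quadratic in $\alpha$. If anything you are slightly more complete: you explicitly verify the strict inequality $ab/(a+b) < \min\{a,b\}$ and flag the positivity condition $Y_{\tilde{\bm{v}}}, Y_{\breve{\bm{v}}} > 0$ it requires, a step the paper leaves implicit.
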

\begin{proof}
The proof is presented in Appendix~\ref{sec:appendixproof}. 
\end{proof}

Theorem~\ref{thm} indicates that if $\alpha$ is set properly, the combined labels can approximate the unknown true labels more accurately. Hence, we can potentially train the primary model more robustly. Note that we cannot calculate the optimal value of $\alpha$ directly.

\section{Experimental Setup}

\subsection{Datasets}

We adopt MultiWOZ 2.0~\citep{budzianowski-etal-2018-multiwoz} and MultiWOZ 2.4~\citep{ye2021multiwoz} as the datasets in our experiments. MultiWOZ 2.0 is one of the largest publicly available multi-domain task-oriented dialogue datasets, including about 10,000 dialogues spanning seven domains. MultiWOZ 2.4 is the latest refined version of MultiWOZ 2.0.  The annotations of its validation set and test set have been manually rectified. While its training set remains intact and is the same as that of MultiWOZ 2.1~\citep{eric-etal-2020-multiwoz}, in which $41.34\%$ of the state values are changed, compared to MultiWOZ 2.0.

Since the \textit{hospital} domain and \textit{police} domain never occur in the test set, we use only the remaining five domains $\{$\textit{attraction}, \textit{hotel}, \textit{restaurant}, \textit{taxi}, \textit{train}$\}$ in our experiments. These domains have 30 slots in total. Considering that the validation set and test set of MultiWOZ 2.0 are noisy, we replace them with the \textit{counterparts} of MultiWOZ 2.4\footnote{Despite this change, we still call the dataset MultiWOZ 2.0 in this paper for ease of exposition.}. We preprocess the datasets following~\citep{ye2021slot}. We use the validation set as the small clean dataset.

\subsection{Evaluation Metrics}

We exploit joint goal accuracy and slot accuracy as the evaluation metrics. The joint goal accuracy is defined as the proportion of dialogue turns in which the values of all slots are correctly predicted. It is the most important metric in the DST task. The slot accuracy is defined as the average of all individual slot accuracies. The accuracy of an individual slot is calculated as the ratio of dialogue turns in which its value is correctly predicted.

We also propose a new evaluation metric, termed as joint turn accuracy. We define joint turn accuracy as the proportion of dialogue turns in which the values of all \textit{active} slots are correctly predicted. A slot becomes active if its value is mentioned in current turn and is not inherited from previous turns. The advantage of joint turn accuracy is that it can tell us in how many turns the turn-level information is fully captured by the model.

\subsection{Primary DST Models}

To verify the effectiveness of the proposed framework, we apply the generated pseudo labels to three different primary models.
\begin{itemize}[label={}, leftmargin=*]
    \item \textbf{SOM-DST:} SOM-DST~\citep{kim-etal-2020-efficient} is an open vocabulary-based method. It treats the dialogue state as an explicit fixed-sized memory and selectively overwrites this memory at each turn.
    
    \item \textbf{STAR:} STAR~\citep{ye2021slot} is a predefined ontology-based method. It leverages a stacked slot self-attention mechanism to capture the slot dependencies automatically.
    
    \item \textbf{AUX-DST:} We also test using the proposed auxiliary model as the primary model. For the sake of description, we refer to this model as AUX-DST.
\end{itemize}

\begin{table*}[t]
\centering
\setlength{\tabcolsep}{1.98mm}
\begin{tabular}{l|cc|ccc|ccc}
\hline
\multirow{3}{*}{\begin{tabular}[c]{@{}l@{}}\textbf{Primary} \\ \textbf{Models}\end{tabular}} & \multicolumn{2}{c|}{\textbf{Labels}} &  \multicolumn{3}{c|}{\textbf{MultiWOZ 2.0}} & \multicolumn{3}{c}{\textbf{MultiWOZ 2.4}} \\ \cline{2-9} 
 & \textbf{Vanilla} & \textbf{Pseudo} & \begin{tabular}[c]{@{}c@{}}\textbf{Joint} \\ \textbf{Goal(\%)}\end{tabular} & \begin{tabular}[c]{@{}c@{}}\textbf{Joint} \\ \textbf{Turn(\%)}\end{tabular} & \textbf{Slot(\%)} & \begin{tabular}[c]{@{}c@{}}\textbf{Joint} \\ \textbf{Goal(\%)}\end{tabular} & \begin{tabular}[c]{@{}c@{}}\textbf{Joint} \\ \textbf{Turn(\%)}\end{tabular} & \textbf{Slot(\%)} \\ \hline
\multirow{3}{*}{SOM-DST} & \cmark & \xmark & 45.14 & 77.86 & 96.71 & 66.78 & 87.81 & 98.38 \\
 & \xmark & \cmark & 67.06 & 87.95 & 98.47 & 68.69 & 88.41 & 98.55 \\
 & \cmark & \cmark & \textbf{70.83} & \textbf{89.14} & \textbf{98.61} & \textbf{75.19} & \textbf{91.02} & \textbf{98.84} \\ \hline
\multirow{3}{*}{STAR} & \cmark & \xmark & 48.30 & 78.91 & 97.10 & 73.62 & 90.45 & 98.85 \\
 & \xmark & \cmark & 70.66 & 85.93 & 98.67 & 71.01 & 86.31 & 98.69 \\
 & \cmark & \cmark & \textbf{74.12} & \textbf{88.93} & \textbf{98.86} & \textbf{79.41} & \textbf{91.86} & \textbf{99.14} \\ \hline
\multirow{3}{*}{AUX-DST} & \cmark & \xmark & 45.66 & 78.76 & 96.95 & 70.37 & 89.31 & 98.67 \\
 & \xmark & \cmark & 70.39 & 86.28 & 98.67 & 70.68 & 86.82 & 98.68 \\
 & \cmark & \cmark & \textbf{73.82} & \textbf{88.29} & \textbf{98.84} & \textbf{78.14} & \textbf{91.03} & \textbf{99.07} \\ \hline
\end{tabular}
\caption{Performance comparison on MultiWOZ 2.0 and MultiWOZ 2.4. Note that MultiWOZ 2.0 and MultiWOZ 2.4 share the same test set in our experiments. The best scores are highlighted in bold.}
\label{tab:allmetrics}
\end{table*}

\subsection{Implementation Details}

For the auxiliary model, the pre-trained BERT-base-uncased model is utilized as the dialogue context encoder. Another pre-trained BERT-base-uncased model with fixed weights is employed to encode the slots and their candidate values. The maximum input length of the BERT model is set to 512. The number of heads in the slot attention module is set to 4. The output dimension of the linear transformation layer is set to 768, which is the same as the dimension of the BERT outputs. Recall that the previous turn dialogue state is treated as part of the input. The ground-truth one is used during training, and the predicted one is used during testing{\footnote{Source code is available at: \url{https://github.com/smartyfh/DST-ASSIST}}}.

We train the auxiliary model on the clean validation set and the primary model on the noisy training set.  When training the auxiliary model, the noisy training set is leveraged to choose the best model. For all primary models, the parameter $\alpha$  is set to 0.6 on MutliWOZ 2.0 and 0.4 on MultiWOZ 2.4. More training details can be found in Appendix~\ref{sec:appendixtrain}.

\section{Experimental Results}

\subsection{Main Results}

Table~\ref{tab:allmetrics} presents the performance scores of the three different primary DST models on the test sets of MultiWOZ 2.0 \& 2.4 when they are trained using our proposed framework ASSIST. For comparison, we also include the results when only the vanilla labels or only the pseudo labels are used to train the primary models. 

As can be seen, ASSIST consistently improves the performance of the three primary models on both datasets. More concretely, compared to the results obtained using only the vanilla labels, ASSIST improves the joint goal accuracy of SOM-DST, STAR, and AUX-DST on MultiWOZ 2.0 by $25.69\%$, $25.82\%$, and $28.16\%$ absolute gains, respectively. On MultiWOZ 2.4, ASSIST also leads to $8.41\%$, $5.79\%$, and $7.77\%$ absolute joint goal accuracy gains. From Table~\ref{tab:allmetrics}, we further observe that the performance improvements on MultiWOZ 2.4 are lower than on MultiWOZ 2.0. This is because the training set of MultiWOZ 2.4 is the same as that of MultiWOZ 2.1~\citep{eric-etal-2020-multiwoz}, in which lots of annotation errors have been fixed. We also observe that all the primary models demonstrate relatively good performance when only the pseudo labels are used. From these results, it can be concluded that the pseudo labels are beneficial and they can help us train DST models more robustly.

Another observation from Table~\ref{tab:allmetrics} is that SOM-DST tends to show comparable or even higher joint turn accuracy compared to STAR and AUX-DST, although its performance is worse in terms of joint goal accuracy and slot accuracy. This is because SOM-DST focuses on turn-active slots and copies the values for other slots from previous turns, while both STAR and AUX-DST predict the values of all slots from scratch at each turn. These results show that the joint turn accuracy can help us understand in more depth how different models behave.

\begin{figure}[h!]
	\centering
	\subfigure[{MultiWOZ 2.0}]{
		\includegraphics[width=0.469\columnwidth]{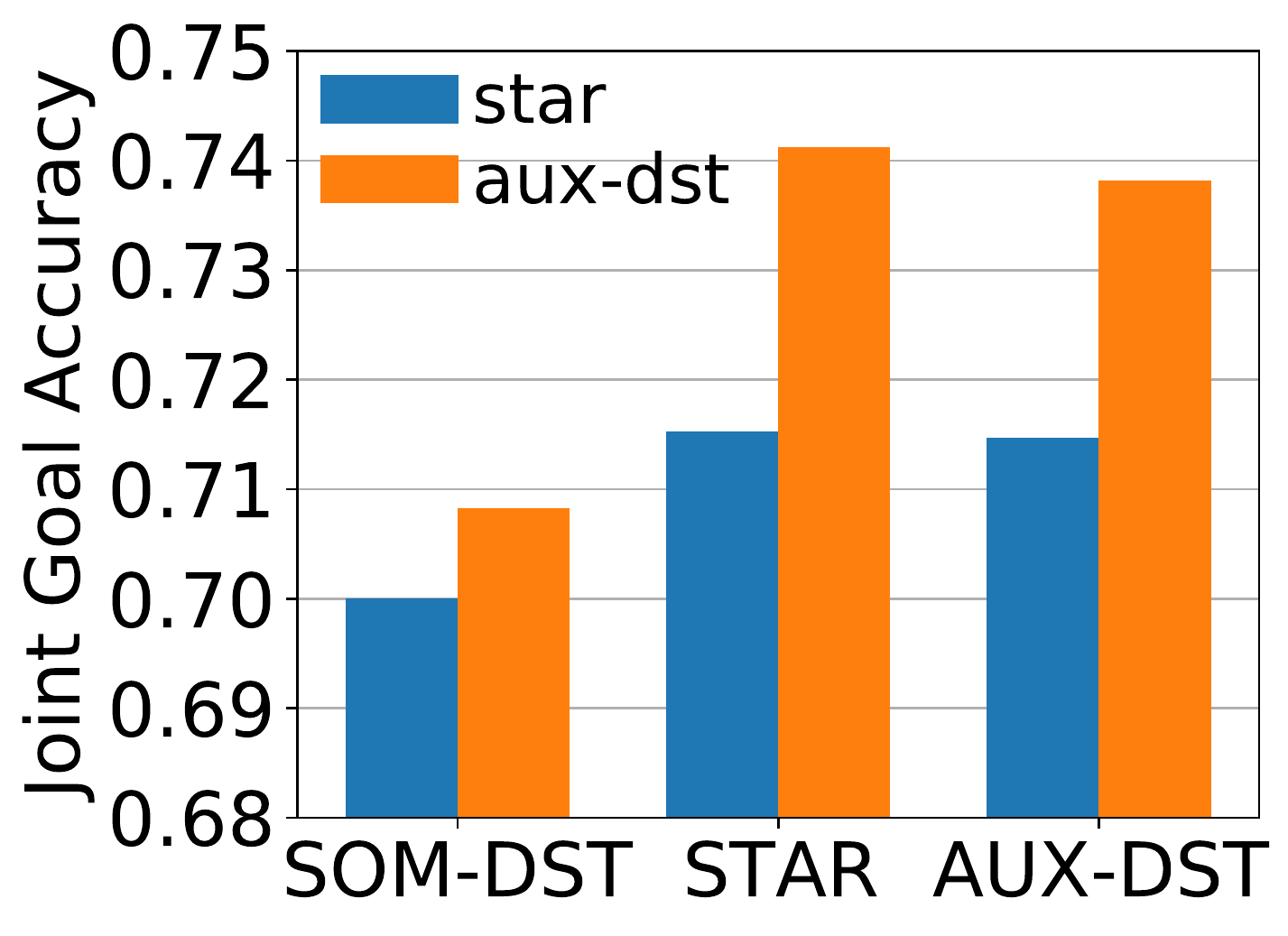}
	}
	\subfigure[{MultiWOZ 2.4}]{
		\includegraphics[width=0.469\columnwidth]{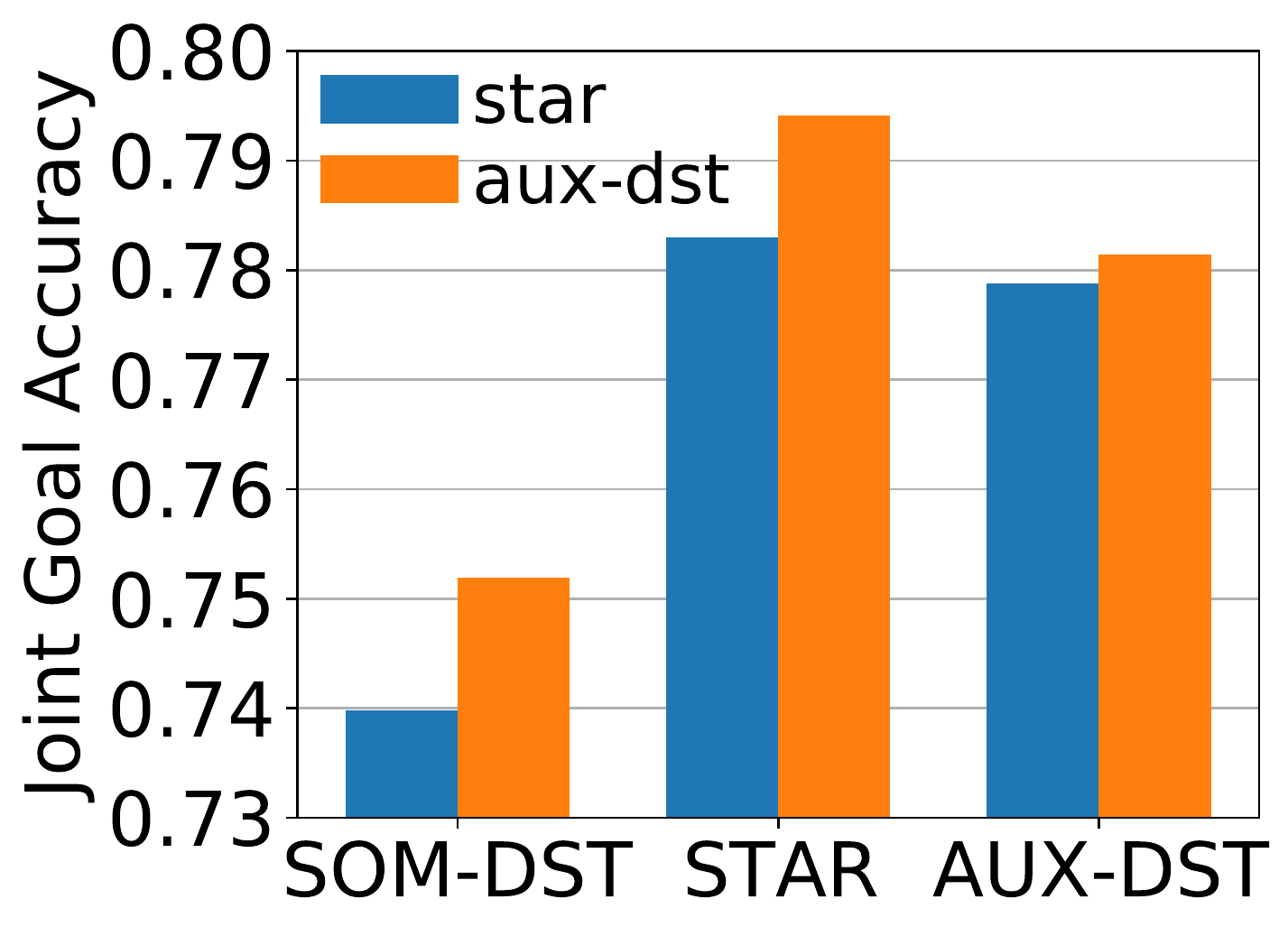}
	}
	\vspace*{-0.2cm}
	\caption{Performance comparison on MultiWOZ 2.0 and MultiWOZ 2.4 by adopting STAR as the auxiliary model. We use lowercase letters in the legend to show that the models are taken as the auxiliary model.}
	\label{fig:starsom}
\end{figure}

\subsection{Applying STAR as the Auxiliary Model}

\begin{figure*}[!t]
   \begin{minipage}{0.34\textwidth}
     \centering
     \includegraphics[width=1\linewidth]{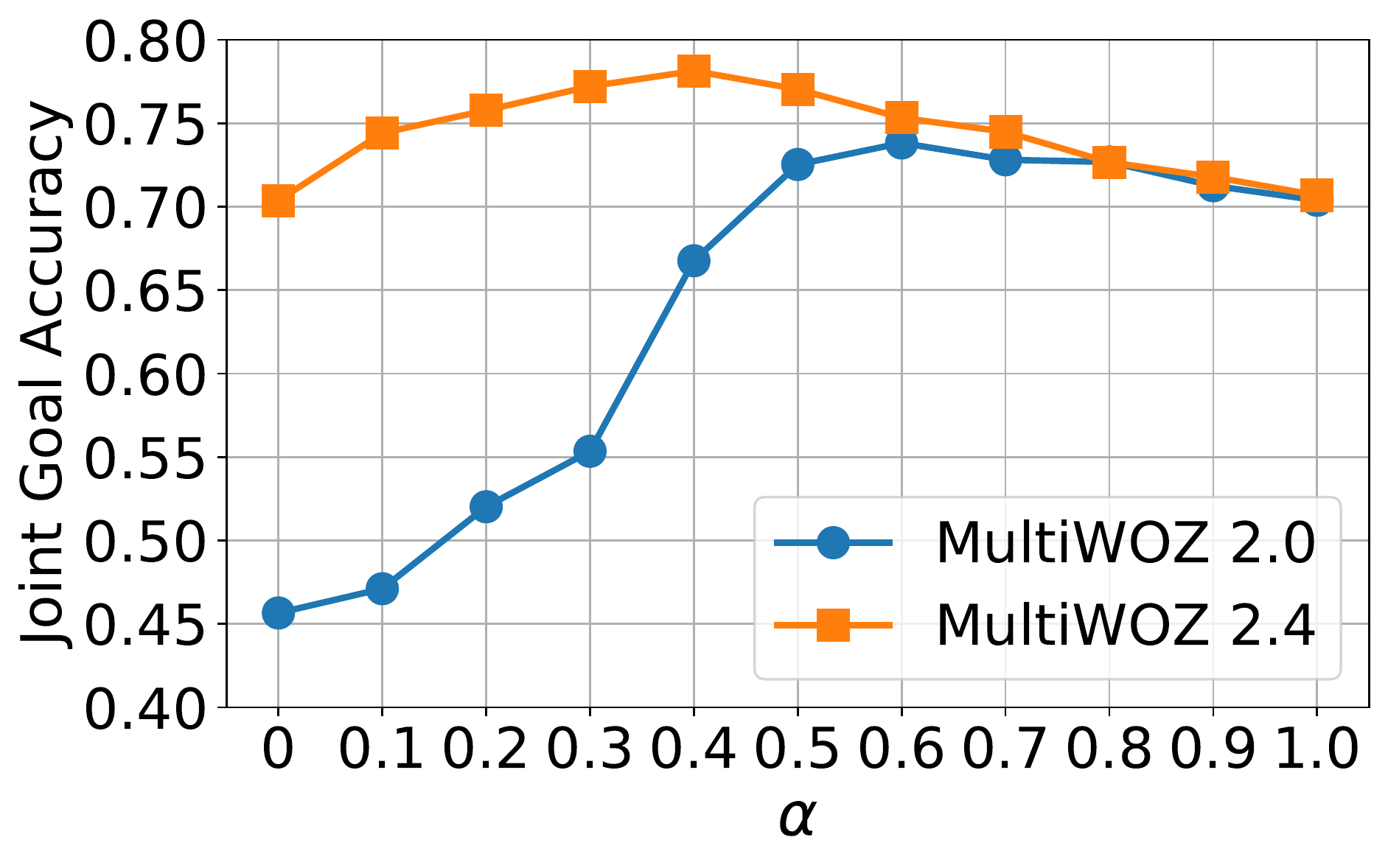}
     \caption{Effects of the parameter $\alpha$. A larger $\alpha$ indicates that more emphasis is put on the pseudo labels.}\label{fig:alpha}
   \end{minipage}\hfill
   \begin{minipage}{0.31\textwidth}
     \centering
     \includegraphics[width=0.908\linewidth]{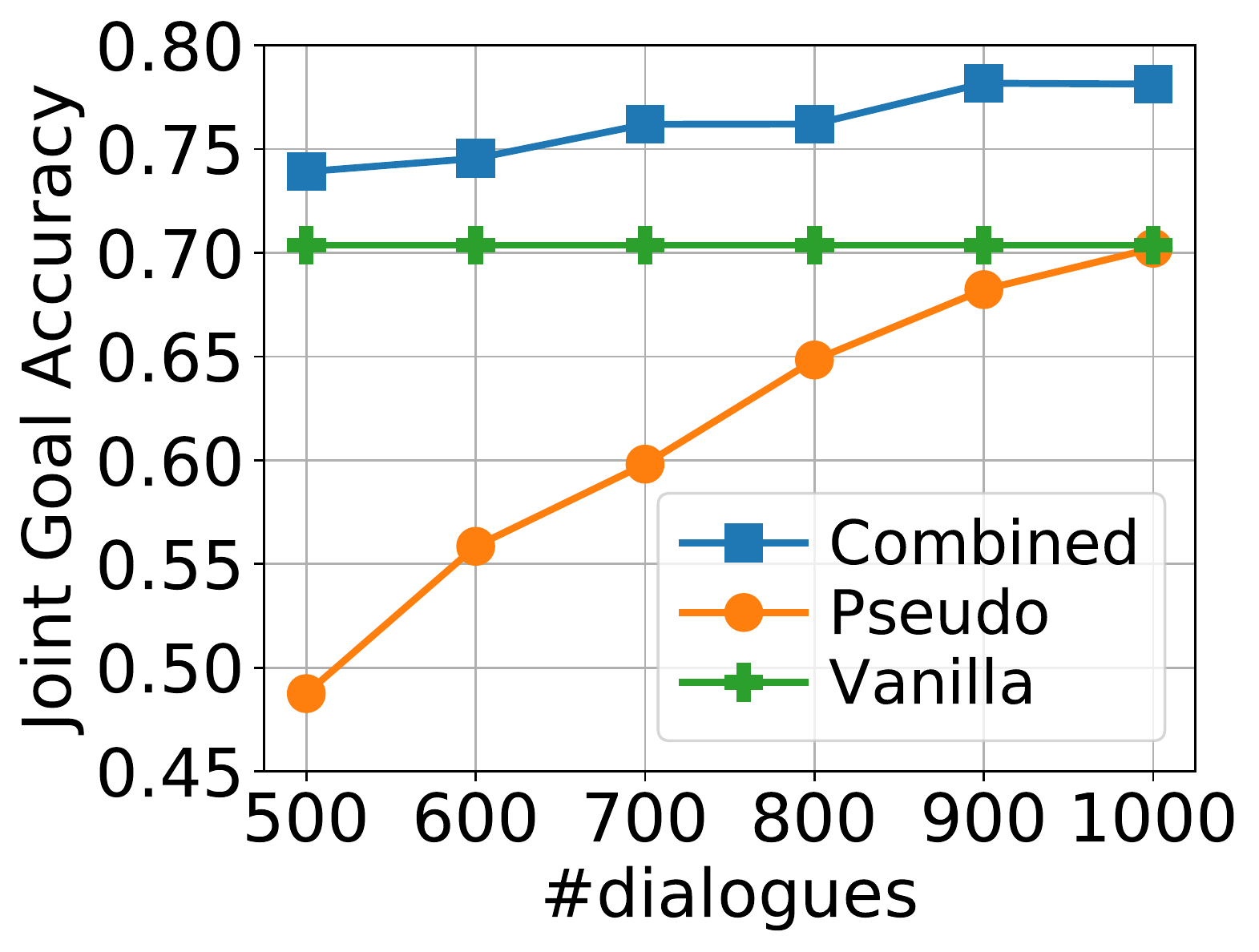}
     \caption{Effects of the size of the clean dataset. We include "Pseudo" and "Vanilla" for comparison.}\label{fig:sizeofcleanset}
   \end{minipage}\hfill
   \begin{minipage}{0.31\textwidth}
     \centering
     \includegraphics[width=0.908\linewidth]{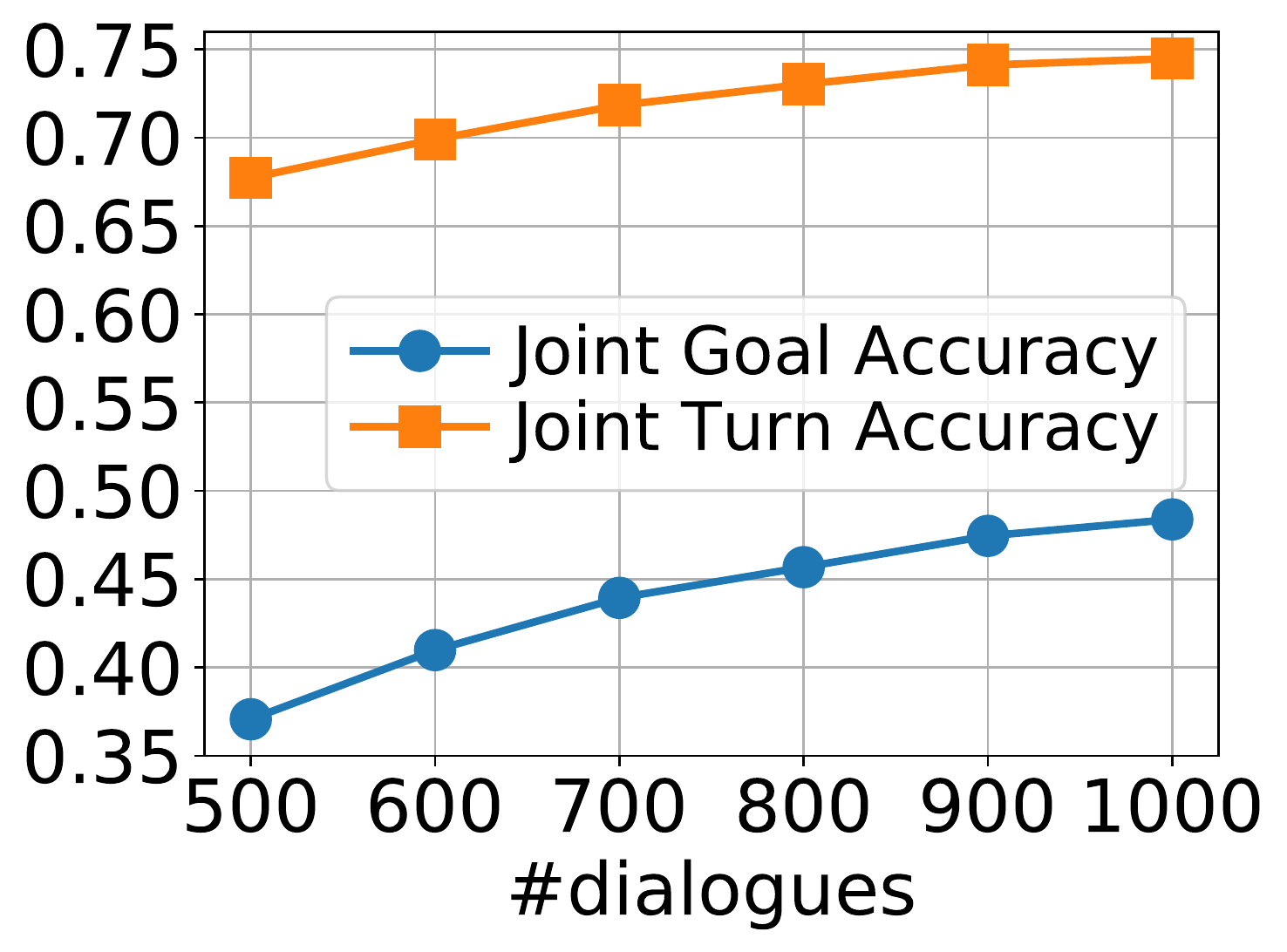}
     \caption{Performance of the auxiliary model evaluated on the noisy training set of MultiWOZ 2.4.}\label{fig:quan}
   \end{minipage}
\end{figure*}

Although any existing DST models can be adopted as the auxiliary model, we chose to propose a new simple one to reduce overfitting. In order to verify the superiority of the proposed model, we also apply STAR as the auxiliary model and compare their performance in Figure~\ref{fig:starsom}. We chose STAR due to its good performance, as shown in Table~\ref{tab:allmetrics}. From Figure~\ref{fig:starsom}, we observe that all three primary models demonstrate higher performance on both datasets when using the proposed auxiliary model than using STAR as the auxiliary model. The results indicate that the proposed auxiliary model is able to generate pseudo labels with higher quality.

\subsection{Effects of Parameter $\alpha$}


The parameter $\alpha$ adjusts the weights of the pseudo labels and vanilla labels in the training phase. Here, we study the effects of $\alpha$ by varying its value in the range of 0 to 1 with a step size of 0.1. Figure~\ref{fig:alpha} shows the results of AUX-DST. As can be seen, $\alpha$ plays an important role in balancing the pseudo labels and vanilla labels. The best performance is achieved when $\alpha$ is set to 0.6 on MultiWOZ 2.0 and 0.4 on MultiWOZ 2.4. Since the training set of MultiWOZ 2.0 has more noisy labels than that of MultiWOZ 2.4, more emphasis should be put on its pseudo labels to obtain the best performance. It is also noted that the performance difference between MultiWOZ 2.0 and MultiWOZ 2.4 dwindles away as $\alpha$ increases. This is because the vanilla labels will contribute less to the training of the primary model when $\alpha$ is set to be larger.

\subsection{Effects of the Size of the Clean Dataset}

Considering that our proposed framework ASSIST relies on a small clean dataset to train the auxiliary model that is further leveraged to generate pseudo labels for the training set, it is valuable to explore the effects of the size of the clean dataset on the performance of the primary model. For this purpose, we vary the number of dialogues in the clean dataset from 500 to 1000\footnote{There are 1000 dialogues in total in the validation set.} to generate different pseudo labels. We then combine these different pseudo labels with the vanilla labels to train the primary model AUX-DST. The results on MultiWOZ 2.4 are reported in Figure~\ref{fig:sizeofcleanset}. For comparison, we also include the results when only the pseudo labels or only the vanilla labels are used to train the primary model. As can be seen, the size of the clean dataset has a great impact on the performance of the primary model. Apparently, fewer clean data will lead to worse performance. Nevertheless, as long as the pseudo labels are combined with the vanilla labels, the primary model can consistently demonstrate the strongest performance.

\subsection{Analyses on Pseudo Labels' Quality}

The previous experiments have proven the effectiveness of the generated pseudo labels in training robust DST models. In this part, we provide further analyses on the quality of the pseudo labels to gain more insights into why they can be beneficial.

\subsubsection{Quantitative Analysis}


We first investigate whether the pseudo labels are consistent with the true labels. To achieve this goal, we can compute the joint goal accuracy and joint turn accuracy of the auxiliary model on the training set. However, the true labels of the training set are unavailable. As an alternative, we treat the vanilla noisy labels as true labels (note that only a portion of the vanilla labels are noisy). In this experiment, we also vary the number of clean dialogues to train the auxiliary model. Figure~\ref{fig:quan} presents the results. As shown in Figure~\ref{fig:quan}, the auxiliary model achieves higher performance when more clean dialogues are utilized to train it. If the entire validation set is used, it achieves around $50\%$ joint goal accuracy and around $75\%$ joint turn accuracy. Given that the vanilla noisy labels are regarded as the true labels, we can conjecture that the true performance is actually higher. This experiment shows that the pseudo labels are consistent with the unknown true labels to some extent and can serve as a good complement to the vanilla noisy labels.


\colorlet{Mycolor1}{green!5!orange!95!}
\definecolor{Mycolor2}{HTML}{5e1bea}
\definecolor{Mycolor3}{HTML}{ec198f} 
\begin{table*}[t!]
\small
\centering
\setlength{\tabcolsep}{1.0mm}
\begin{tabular}{l|c|c}
\hline
\multicolumn{1}{c|}{\textbf{Dialogue Context}}                                                                                                                                                              & \textbf{Vanilla Labels}                                                                                                                   & \textbf{Pseudo Labels}                                                                                                                                  \\ \hline \hline

\begin{tabular}[c]{@{}l@{}}\textbf{{[}sys{]}:} Sure,  \textcolor{Mycolor1}{da vinci pizzeria} is a cheap Italian \\ restaurant in the area.\\ \textbf{{[}usr{]}:} Would you mind making a reservation for\\  \textcolor{Mycolor1}{Thursday} at  \textcolor{Mycolor1}{17:15}?\end{tabular} & (restaurant-name, da vinci pizzeria)                                                                                             & \begin{tabular}[c]{@{}c@{}}\textcolor{Mycolor2}{(restaurant-book day, thursday)}\\ \textcolor{Mycolor2}{(restaurant-book time, 17:15)}\\ (restaurant-name, da vinci pizzeria)\end{tabular} \\ \hline
\begin{tabular}[c]{@{}l@{}}\textbf{{[}sys{]}:} Do you have a preferred section of town?\\ \textbf{{[}usr{]}:} \textcolor{Mycolor1}{Not really}, but I want  \textcolor{Mycolor1}{free wifi} and it\\ should be  \textcolor{Mycolor1}{4 star}.\end{tabular}                                 & \begin{tabular}[c]{@{}c@{}}(hotel-internet, free)\\ (hotel-stars, 4)\end{tabular}                                                & \begin{tabular}[c]{@{}c@{}} \textcolor{Mycolor2}{(hotel-area, dontcare)}\\ (hotel-internet, free)\\ (hotel-stars, 4)\end{tabular}                                     \\ \hline
\begin{tabular}[c]{@{}l@{}}\textbf{{[}usr{]}:} I need to find out if there is a train going\\ to  \textcolor{Mycolor1}{stansted airport} that leaves after  \textcolor{Mycolor1}{12:30}.\end{tabular}                                                     & \begin{tabular}[c]{@{}c@{}}\textcolor{Mycolor3}{(train-arriveby, 13:03)}\\ (train-destination, stansted airport)\\ (train-leaveat, 12:30)\end{tabular} & \begin{tabular}[c]{@{}c@{}}(train-destination, stansted airport)\\ (train-leaveat, 12:30)\end{tabular}                                         \\ \hline
\begin{tabular}[c]{@{}l@{}}\textbf{{[}usr{]}:} I am staying in the  \textcolor{Mycolor1}{west} part of Cambridge\\ and would like to know about some  \textcolor{Mycolor1}{places to go}.\end{tabular}                                                    & (attraction-area, west)                                                                                                          & \begin{tabular}[c]{@{}c@{}}(attraction-area, west)\\ \textcolor{Mycolor3}{(hotel-area, west)}\end{tabular}                                                           \\ \hline
\end{tabular}
\caption{Four dialogue snippets with their vanilla labels and the generated pseudo labels. These dialogue snippets are chosen from the training set of MultiWOZ 2.4. To save space, we only present turn-active slots and their values. }
\label{tab:case}
\end{table*}

\subsubsection{Qualitative Analysis}

To intuitively understand the quality of the pseudo labels, we show four dialogue snippets with their vanilla labels and the generated pseudo labels in Table~\ref{tab:case}. As can be seen, the vanilla labels of the first two dialogue snippets are incomplete, while all the missing information is presented in the pseudo labels. For the third dialogue snippet, the vanilla labels contain an unmentioned slot-value pair "\textit{(train-arriveby, 13:03)}". This error has also been fixed in the pseudo labels. For the last dialogue snippet, the vanilla labels are correct. However, the pseudo labels introduce an overconfident prediction of the value of slot "\textit{hotel-area}". This case study has verified again that the pseudo labels can be utilized to fix certain errors in the vanilla labels. However, the pseudo labels may bring about some new errors. Hence, we should combine the two types of labels so as to achieve the best performance.

\begin{table}[h]
\begin{tabular}{l|c|c}
\hline
 \textbf{Settings} & \textbf{MultiWOZ 2.0} & \textbf{MultiWOZ 2.4} \\ \hline
T & 45.66 & 71.80 \\
T+C & 50.75 & 76.89 \\
T+P & 73.82 & 78.47 \\ 
T+C+P & 74.96 & 78.92 \\ \hline
\end{tabular}
\caption{The joint goal accuracy ($\%$) of AUX-DST on MultiWOZ 2.0 \& 2.4 under different training settings. T: the noisy training set. C: the small clean dataset. P: the generated pseudo labels of the original training set. The reported scores are the best ones on the test set.}
\label{tab:combinedaaa}
\end{table}

\subsection{Pseudo Labels vs. Simple Combination}

Aiming to better validate the effectiveness of the proposed framework, we also report the results when the small clean dataset is directly combined with the large noisy training set to train the primary model. We adopt AUX-DST as the primary model and show the results in Table~\ref{tab:combinedaaa}. Since the clean dataset (i.e., the validation set in our experiments) is combined with the training set, all the results in Table~\ref{tab:combinedaaa} are the best ones on the test set. As can be observed, a simple combination of the noisy training set and clean dataset can lead to better results. However, the performance improvements are lower, compared to using pseudo labels (especially on MultiWOZ 2.0 due to its noisier training set). It is also observed that when both the clean dataset and the pseudo labels are utilized to train the model, even higher performance can be achieved. These results indicate that our proposed framework can make better use of the small clean dataset to train the primary model.

\subsection{Error Analysis}

\begin{figure}[t!]
  \centering
  \includegraphics[width=1.0\linewidth]{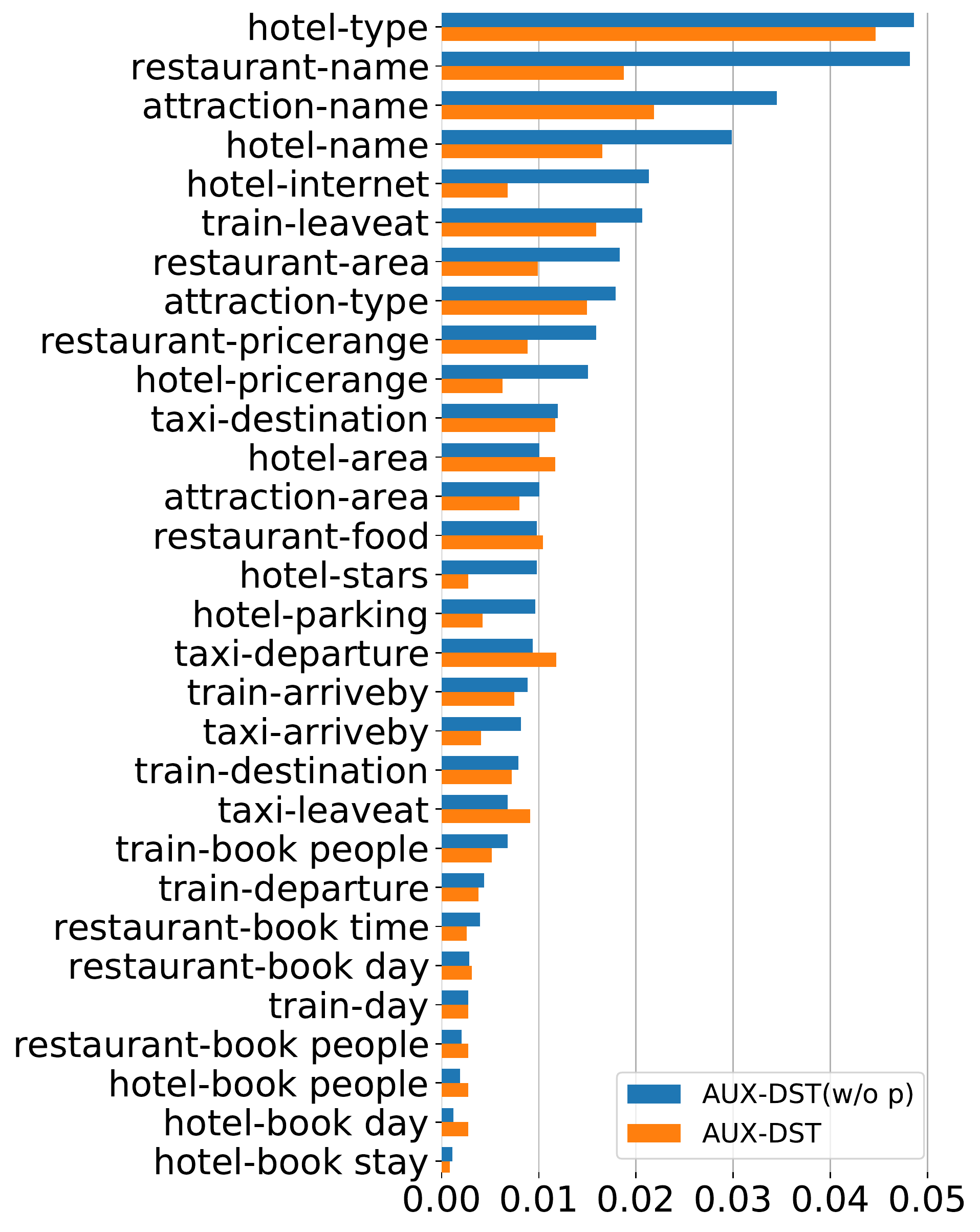}
  \caption{The error rate of each slot on MultiWOZ 2.4.}
  \label{fig:err}
\end{figure}

We further investigate the error rate with respect to each slot. We adopt AUX-DST as the primary model and use AUX-DST(w/o p) to denote the case when only the vanilla labels are employed to train the model. The results on the test set of MultiWOZ 2.4 are illustrated in Figure~\ref{fig:err}, from which we can observe that the slot "\textit{hotel-type}" has the highest error rate. Even though the error rate is reduced with the aid of the pseudo labels, it is still the highest one among all the slots. This is because the labels of this slot are confusing. It is also observed that the "\textit{name}"-related slots have relatively high error rates. However, when the pseudo labels are used, their error rates reduce remarkably. Besides, we observe that the error rates of some slots are higher when the pseudo labels are leveraged. This is probably due to the fact that we have used the same parameter $\alpha$ to combine the pseudo labels and vanilla labels of all slots. In practice, the noise rate with respect to each slot in the vanilla labels may not be exactly the same. This observation inspires us that more advanced techniques should be developed to combine the pseudo labels and vanilla labels, which we leave as our future work.

\section{Related Work}

In this section, we briefly review related work on DST and noisy label learning.

\subsection{Dialogue State Tracking}

Recently, DST has got an enormous amount of attention, thanks to the availability of multiple large-scale multi-domain dialogue datasets such as MultiWOZ 2.0~\citep{budzianowski-etal-2018-multiwoz}, MultiWOZ 2.1~\citep{eric-etal-2020-multiwoz}, RiSAWOZ~\citep{quan-etal-2020-risawoz}, and SGD~\citep{rastogi2020towards}. The most popular datasets are MultiWOZ 2.0 and MultiWOZ 2.1, and lots of DST models have been built on top of them~\citep{lee-etal-2019-sumbt, wu-etal-2019-transferable, ouyang-etal-2020-dialogue, hosseini2020simple, kim-etal-2020-efficient, hu-etal-2020-sas, feng2020sequence, ye2021slot, lin2021knowledge, liang2021attention}.

These recent DST models can be grouped into two categories: predefined ontology-based models and open vocabulary-based models. The predefined ontology-based models treat DST as a multi-label classification problem and tend to demonstrate better performance~\citep{chen2020schema, zhang-etal-2020-find, shan-etal-2020-contextual, ye2021slot}. The open vocabulary-based models leverage either span prediction~\citep{heck-etal-2020-trippy, gao-etal-2020-machine} or sequence generation~\citep{wu-etal-2019-transferable, feng2020sequence, hosseini2020simple} to extract slot values from the dialogue context directly.

Although these DST models have made a huge success, they can only achieve sub-optimal performance, due to the lack of handling noisy labels. To the best of our knowledge, we are the first to take the noisy labels into consideration when tackling the DST problem.

\subsection{Noisy Label Learning}

Addressing noisy labels in supervised learning is a long-term studied problem~\citep{frenay2013classification, song2020learning, han2020survey}. This issue becomes more prominent in the era of deep learning, as training deep models generally requires a lot of well-labelled data, but it is expensive and time-consuming to collect large-scale datasets with completely clean annotations. This dilemma has sparked a surge of noisy label learning methods~\citep{hendrycks2018using, zhang2018generalized, song2019selfie, wei2020combating}. Even so, these methods mainly focus on multi-class classification~\citep{song2020learning}, which makes it not straightforward to apply them to the DST task.

\section{Conclusion}

In this work, we have presented a general framework ASSIST, aiming to train DST models robustly from noisy labels. ASSIST leverages an auxiliary model that is trained on a small clean dataset to generate pseudo labels for the large noisy training set. The pseudo labels are combined with the vanilla labels to train the primary model. Both theoretical analysis and empirical study have verified the validity of our proposed framework. In the future, we intend to explore more advanced techniques to combine the pseudo labels and vanilla noisy labels in a better way.

\section*{Acknowledgments}
This project was funded by the EPSRC Fellowship titled “Task Based Information Retrieval” and grant reference number EP/P024289/1.

\bibliography{anthology,custom}
\bibliographystyle{acl_natbib}

\appendix

\section{Proof of Theorem 1}
\label{sec:appendixproof}
\begin{proof}
Our proof is based on the bias-variance decomposition theorem\footnote{\url{https://en.wikipedia.org/wiki/Bias-variance_tradeoff}}. For any sample $\mathcal{X}_t$ in the noisy training set $\mathcal{D}_n$, the approximation error with respect to the pseudo label $\breve{v}_t$ of slot $s$ is defined as $E_{\mathcal{D}_c}[\Vert \breve{\bm{v}}_t - \bm{v}_t \Vert^2_2]$, which, according to the bias-variance decomposition theorem, can be decomposed into a bias term and a variance term, i.e., 
\begin{equation*}
    E_{\mathcal{D}_c}[\Vert \breve{\bm{v}}_t - \bm{v}_t \Vert^2_2] = (\texttt{Bias}_{\mathcal{D}_c}[\breve{\bm{v}}_t])^2 + \texttt{Var}_{\mathcal{D}_c} [\breve{\bm{v}}_t],
\end{equation*}
where
\begin{equation*}
\begin{aligned}
    &\texttt{Bias}_{\mathcal{D}_c}[\breve{\bm{v}}_t] = \Vert E_{\mathcal{D}_c}[\breve{\bm{v}}_t] - \bm{v}_t \Vert_2, \\
    &\texttt{Var}_{\mathcal{D}_c} [\breve{\bm{v}}_t] = E_{\mathcal{D}_c}[\Vert E_{\mathcal{D}_c}[\breve{\bm{v}}_t] - \breve{\bm{v}}_t \Vert^2_2].
\end{aligned}
\end{equation*}

In our approach, the auxiliary model is a BERT-based model, which has more than 110M parameters. Such a complex model is expected to be able to capture all the samples in the small clean dataset $\mathcal{D}_c$. Therefore, we can reasonably assume that the bias term is close to zero. Then, we have:
\begin{equation*}
\begin{aligned}
    \texttt{Bias}_{\mathcal{D}_c}[\breve{\bm{v}}_t] \approx 0 \Rightarrow E_{\mathcal{D}_c}[\breve{\bm{v}}_t] \approx \bm{v}_t. 
\end{aligned}
\end{equation*}

Considering that the pseudo labels are generated by the auxiliary model that is trained on an extra small clean dataset and this clean dataset is independent of the noisy training set, we can regard the pseudo labels and vanilla labels as independent of each other. Consequently, we obtain:
\begin{equation*}
    \begin{aligned}
        E_{\mathcal{D}_c}&[(\tilde{\bm{v}}_t - \bm{v}_t)^T(\breve{\bm{v}}_t - \bm{v}_t)]  \\
        &= [ E_{\mathcal{D}_c}[\tilde{\bm{v}}_t - \bm{v}_t]]^T E_{\mathcal{D}_c}[\breve{\bm{v}}_t - \bm{v}_t] \\
        &= [ E_{\mathcal{D}_c}[\tilde{\bm{v}}_t - \bm{v}_t]]^T E_{\mathcal{D}_c}[\breve{\bm{v}}_t - E_{\mathcal{D}_c}[\breve{\bm{v}}_t]] \\
        & = [ E_{\mathcal{D}_c}[\tilde{\bm{v}}_t - \bm{v}_t]]^T \bm{0} = 0.
    \end{aligned}
\end{equation*}

Based on the formula above, we can now calculate the approximation error with respect to the combined label $\bm{v}^c_t$ of slot $s$ as below:
\begin{equation*}
    \begin{aligned}
        E_{\mathcal{D}_c}&[\Vert \bm{v}^c_t - \bm{v}_t \Vert^2_2] \\
        &= E_{\mathcal{D}_c}[\Vert \alpha \breve{\bm{v}}_t + (1-\alpha)\tilde{\bm{v}}_t - \bm{v}_t \Vert^2_2] \\
        &= E_{\mathcal{D}_c}[\Vert \alpha (\breve{\bm{v}}_t - \bm{v}_t) + (1-\alpha)(\tilde{\bm{v}}_t - \bm{v}_t) \Vert^2_2]\\
        &= \alpha^2  E_{\mathcal{D}_c}[\Vert \breve{\bm{v}}_t - \bm{v}_t \Vert^2_2] \\
        &\quad+ (1-\alpha)^2 E_{\mathcal{D}_c}[\Vert \tilde{\bm{v}}_t - \bm{v}_t \Vert^2_2],
    \end{aligned}
\end{equation*}
where the last equality holds because of $E_{\mathcal{D}_c}[(\tilde{\bm{v}}_t - \bm{v}_t)^T(\breve{\bm{v}}_t - \bm{v}_t)] = 0$.
Then, we have:
\begin{equation*}
    \begin{aligned}
        Y_{{\bm{v}}^c} &= \frac{1}{|\mathcal{D}_n||\mathcal{S}|}\sum_{\mathcal{X}_t \in \mathcal{D}_n} \sum_{s \in \mathcal{S}} E_{\mathcal{D}_c} [ \Vert \bm{v}^c_t - \bm{v}_t \Vert^2_2] \\
        &=  \frac{\alpha^2}{|\mathcal{D}_n||\mathcal{S}|}\sum_{\mathcal{X}_t \in \mathcal{D}_n} \sum_{s \in \mathcal{S}} E_{\mathcal{D}_c} [\Vert \breve{\bm{v}}_t - \bm{v}_t \Vert^2_2] \\
        &\quad+  \frac{(1-\alpha)^2}{|\mathcal{D}_n||\mathcal{S}|}\sum_{\mathcal{X}_t \in \mathcal{D}_n} \sum_{s \in \mathcal{S}} E_{\mathcal{D}_c}[\Vert \tilde{\bm{v}}_t - \bm{v}_t \Vert^2_2] \\
        &= \alpha^2 Y_{\breve{\bm{v}}} + (1-\alpha)^2 Y_{\tilde{\bm{v}}}.
    \end{aligned}
\end{equation*}
$Y_{{\bm{v}}^c}$ reaches its minimum when $\alpha = \frac{Y_{\tilde{\bm{v}}}}{Y_{\tilde{\bm{v}}} + Y_{\breve{\bm{v}}}}$, and 
\begin{equation*}
    \min_{\alpha} Y_{{\bm{v}}^c} = \frac{Y_{\tilde{\bm{v}}} Y_{\breve{\bm{v}}}}{Y_{\tilde{\bm{v}}} + Y_{\breve{\bm{v}}}},
\end{equation*}
which concludes the proof.
\end{proof}

\section{Training Details}
\label{sec:appendixtrain}
Note that the proposed auxiliary model is also applied as one primary model in our experiments. In both cases, AdamW~\citep{kingma2014adam} is adopted as the optimizer, and a linear schedule with warmup is created to adjust the learning rate dynamically. The peak learning rate is set to 2.5e-5. The warmup proportion is fixed at 0.1. The dropout~\citep{srivastava2014dropout} probability and word dropout~\citep{bowman-etal-2016-generating} probability are also fixed at 0.1. When taken as the auxiliary model, the model is trained for at most 30 epochs with a batch size of 8. When taken as the primary model, the batch size and training epochs are set to 8 and 12, respectively. The best model is chosen according to the performance on the validation set. We apply left truncation when the input exceeds the maximum input length of BERT.

For SOM-DST and STAR, the default hyperparameters  are adopted when they are applied as the primary model (except setting \textit{num\_workers = 0}).

\section{Additional Experimental Results}

\begin{figure}[h!]
  \centering
  \includegraphics[width=0.9\linewidth]{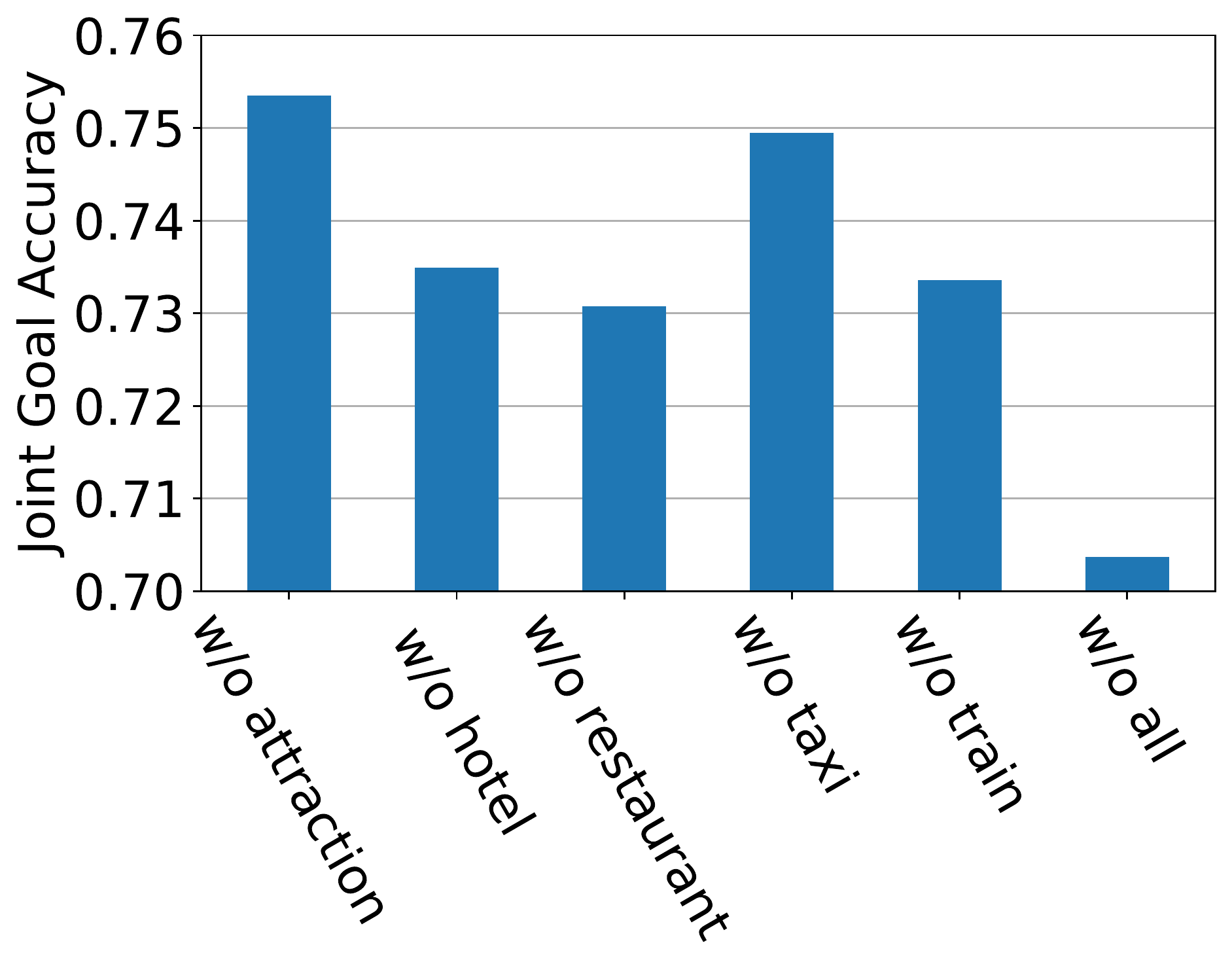}
  \caption{Analyses on the effects of the distribution of the clean dataset by removing all the dialogues related to each domain. "w/o all" means no clean data is used.}
  \label{fig:varydist}
\end{figure}

\subsection{Effects of the Distribution of the Clean Dataset}

Except for the size of the clean dataset, the distribution of the clean dataset may also affect the performance of the primary model, especially when the clean dataset has a significantly different distribution from the training set. Thus, it is important to study the effects of the distribution of the clean dataset. However, we are short of clean datasets with different distributions. It is also challenging to model the distribution explicitly since the dialogue state may contain multiple labels. To address this issue, we propose to remove all the dialogues that are related to a specific domain and use only the remaining ones as the clean dataset. As thus, we can create multiple clean datasets with different distributions. The results of AUX-DST on MultiWOZ 2.4 are shown in Figure~\ref{fig:varydist}. As can be observed, although different clean datasets indeed lead to different performance, compared to the situation where no clean data is used (i.e., only the vanilla labels are used to train the model), all these clean datasets still bring huge performance improvements.

\end{document}